\documentclass{article}

\usepackage{arxiv}

\usepackage[utf8]{inputenc}
\usepackage{microtype}

\usepackage{amsmath}
\usepackage{amssymb}
\usepackage{amsthm}

\usepackage{booktabs}
\usepackage{array}
\usepackage{tabularx}
\usepackage{graphicx}

\usepackage{listings}
\usepackage{xcolor}
\usepackage{natbib}
\usepackage[colorlinks=true, linkcolor=blue, citecolor=blue, urlcolor=blue, backref=page]{hyperref}
\usepackage{url}

\newtheorem{proposition}{Proposition}

\newcommand{\rltrp}{\texttt{rl-triton}}

\title{\rltrp: High-Performance Triton GPU Kernels\\
       for Reinforcement Learning Credit Assignment}

\author{
  Lars Simon Zehnder\thanks{Correspondence: \texttt{simon@neway.ai}, \texttt{simon.zehnder@gmail.com}. Code: \url{https://github.com/simonsays1980/rl-triton}} \\
  Independent Researcher \\
  Faro, Portugal \\
}

\date{}

\begin{document}

\maketitle

\begin{abstract}
We present \rltrp{}, an open-source library of high-performance GPU kernels for
reinforcement learning credit assignment, implemented in Triton.  The core
contribution is a unified associative scan framework that recasts seven distinct
RL estimation algorithms -- Generalized Advantage Estimation (GAE), V-Trace,
Retrace($\lambda$), TD($\lambda$) returns, discounted returns, eligibility
traces, and episodic prefix sums -- as instances of a single first-order linear
recurrence solved in $O(\log T)$ parallel steps.  All algorithms share the same 
associative scan operator, with algorithm-specific fused Triton kernels constructing 
their recurrence coefficients on-chip.
We verify the associative operator algebraically and define the treatment of 
terminated and truncated episodes explicitly. Benchmarks show a 1.6--5.70$\times$
full-call speedup over a vectorized \texttt{torch.compile} baseline in the
massively parallel simulation regime (thousands of environments, short
rollouts).  The reported range covers all seven algorithms on both GPUs, both with and
without per-step truncation handling.  For most algorithms, 
speedups increase at longer sequence lengths, as the
baseline requires more scan stages as $\log T$ grows, each adding an
intermediate HBM round-trip.  The library is available at 
\url{https://github.com/simonsays1980/rl-triton}.
\end{abstract}

\section{Introduction}

Every iteration of a deep reinforcement learning training pipeline requires
computing \emph{credit assignment} quantities: advantages, value targets, and
return estimates that tell the policy network which actions were good and by how
much.  These quantities include Generalized Advantage Estimation
(GAE) \citep{schulman2016gae}, V-Trace targets \citep{espeholt2018impala},
Retrace($\lambda$) Q-value targets \citep{munos2016retrace}, and several
simpler estimators.  While network forward and backward passes and environment
rollouts typically dominate overall training time, credit assignment
computations are on the critical path of every update step. They operate over 
tensors of shape $[\texttt{num\_envs}, \texttt{seq\_len}]$, where
\texttt{num\_envs} is the number of parallel environments (or workers) and
\texttt{seq\_len}=$T$ is the number of timesteps in each rollout-buffer row,
i.e., the scan length. It is not necessarily the episode length, since a row
may contain multiple episodes separated by boundary flags.  Typical configurations range from
a handful of environments with short rollouts (e.g., 8 environments,
128 steps in standard PPO \citep{schulman2017ppo}) to thousands of environments
with longer horizons in large-scale distributed training
\citep{espeholt2018impala, petrenko2020samplefactory} and RLHF pipelines where
rollout sequences span thousands of tokens \citep{ouyang2022instructgpt}.

All of these estimators share a common mathematical structure: they are
\textbf{first-order linear recurrences}. The five backward estimators have
the form
\begin{equation}
  A_t = \alpha_t + \beta_t \cdot A_{t+1}, \qquad A_T = 0
  \;\text{(or a bootstrap value),}
  \label{eq:recurrence}
\end{equation}
where $\alpha_t$ and $\beta_t$ are algorithm-specific per-step coefficients.
Eligibility traces and episodic prefix sums use the forward-time mirror
$A_t=\alpha_t+\beta_t A_{t-1}$. In either direction, the recurrence creates
a strict temporal dependency chain: a naive sequential implementation
requires $T$ serial steps regardless of how many parallel GPU cores are
available.

Existing RL codebases commonly evaluate this recurrence with a sequential
loop over $t$ (Section~\ref{sec:relatedwork}). Wrapping that loop in
\texttt{torch.compile} removes Python overhead but does not remove its
temporal dependency. An $O(\log T)$ restructuring instead requires expressing
the recurrence as an explicit parallel associative scan \citep{blelloch1990prefix} 
(Section~\ref{sec:scans}). We implement this construction in PyTorch as the
baseline used in our evaluation (Section~\ref{sec:baseline}). Although it has
the same asymptotic depth as the Triton scan, each doubling step still incurs
an HBM round-trip.

\textbf{We implement fused Triton kernels} for these recurrences. The five
backward algorithms use Equation~\ref{eq:recurrence}, while eligibility
traces and episodic prefix sums use its forward-time mirror. Each algorithm
has its own fused kernel, which constructs its coefficients 
(Section~\ref{sec:framework}) from raw rollout inputs on-chip and evaluates 
the recurrence using the same associative scan operator and combine function.

\paragraph{Contributions.}
\begin{enumerate}
  \item A unified algebraic framework showing that seven standard RL credit
        assignment algorithms are instances of a single first-order linear
        recurrence.
  \item Fused Triton kernels sharing one associative scan, with on-chip
        coefficient construction and explicit termination/truncation handling.
  \item Algorithm-specific treatment of bootstrap values at rollout-window
        boundaries -- an additive term in $\alpha_{T-1}$ for some algorithms
        and a nonzero scan carry for others
        (Section~\ref{sec:boundaries}).
  \item A benchmarking harness showing a 1.6--5.70$\times$ full-call speedup
        over a vectorized \texttt{torch.compile} baseline across all seven
        algorithms and both GPUs in the massively parallel simulation regime
        (Section~\ref{sec:results}).
  \item An open-source Python package with correctness tests, performance
        regression tests, and documentation.
\end{enumerate}

\section{Background}

\subsection{The Sequential Dependency Problem}

Consider the GAE recurrence \citep{schulman2016gae}:
\[
  A_t = \delta_t + \gamma\lambda \cdot A_{t+1},
\]
where $\delta_t = r_t + \gamma V(s_{t+1}) - V(s_t)$ is the one-step TD error.
Computing $A_0$ requires $A_1$, which requires $A_2$, all the way to
$A_{T-1}$.  This is a strict sequential chain of length $T$.

On GPU hardware, sequential chains of length $T$ are expensive not because of
arithmetic cost but because of \textbf{memory traffic}: a naive loop must read
$\delta_t$ and $\gamma\lambda$ from HBM, perform the addition and
multiplication, and write $A_t$ back to HBM -- one full round-trip per
timestep.  For $T{=}1024$, $\texttt{num\_envs}{=}128$, this means 1024 serial
HBM round-trips even though 128 environment rows could in principle be
processed in parallel.

\texttt{torch.compile} cannot remove this dependency: it eliminates
per-step Python interpreter overhead, but a compiler cannot break a true
data dependency the algorithm itself imposes, so the loop still executes
$T$ sequential steps.  This uncompiled sequential loop -- the pattern used
by CleanRL, RLlib, and similar codebases (Section~\ref{sec:relatedwork}) --
is reported as the \emph{Loop} baseline in our results (Section~\ref{sec:results}).

Removing the $O(T)$ chain requires restructuring the \emph{algorithm}, not
just compiling it: rewriting the recurrence as an associative scan
(Section~\ref{sec:scans} below) so that \texttt{torch.compile} has a
$\log_2 T$-depth computation to compile in the first place, rather than a
$T$-depth one.  This is the vectorized, \texttt{torch.compile}'d doubling-scan
baseline used throughout this paper's evaluation (Section~\ref{sec:baseline})
and is a
substantially stronger baseline than the naive loop -- it achieves the same
$O(\log T)$ depth as our Triton kernel's internal scan (Section~\ref{sec:fused})
-- but, as Section~\ref{sec:baseline} details, still pays a full HBM
round-trip per doubling step rather than keeping intermediate results
on-chip.

\subsection{Parallel Associative Scans}
\label{sec:scans}

The parallel associative scan is a foundational algorithm in parallel computing
\citep{blelloch1990prefix,ladner1980parallel}.  Given an array of elements
$[x_0, x_1, \ldots, x_{T-1}]$ and an associative binary operator $\oplus$,
the scan computes all prefix reductions $[x_0,\, x_0 \oplus x_1,\, \ldots,\,
x_0 \oplus \cdots \oplus x_{T-1}]$ in $O(\log T)$ parallel steps on $T$
processors, using a binary tree reduction.

The first-order linear recurrence $A_t = \alpha_t + \beta_t \cdot A_{t+1}$ 
can be expressed as a prefix scan of tuples $(\alpha_t, \beta_t)$ under the 
associative operator:
\begin{equation}
  (\alpha_B, \beta_B) \oplus (\alpha_A, \beta_A)
    = \bigl(\alpha_B + \beta_B\,\alpha_A,\;\; \beta_A\,\beta_B\bigr).
  \label{eq:combine}
\end{equation}

\begin{proposition}[Associativity of $\oplus$]
The operator defined by Equation~\ref{eq:combine} is associative.
\end{proposition}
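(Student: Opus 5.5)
The plan is a direct algebraic verification, supported by a structural explanation that also shows why the result is not a coincidence. Take three arbitrary tuples $X=(\alpha_X,\beta_X)$, $Y=(\alpha_Y,\beta_Y)$, $Z=(\alpha_Z,\beta_Z)$ with real (or, more generally, commutative-ring) entries. Compute $(X\oplus Y)\oplus Z$ and $X\oplus(Y\oplus Z)$ using Equation~\ref{eq:combine}, reading the left operand as the role of $B$ and the right operand as the role of $A$.

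\textbf{Left bracketing.} First, $X\oplus Y=(\alpha_X+\beta_X\alpha_Y,\ \beta_Y\beta_X)$. Combining this with $Z$ gives
\[
\bigl(\alpha_X+\beta_X\alpha_Y+\beta_Y\beta_X\alpha_Z,\ \beta_Z\beta_Y\beta_X\bigr).
\]

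\textbf{Right bracketing.} First, $Y\oplus Z=(\alpha_Y+\beta_Y\alpha_Z,\ \beta_Z\beta_Y)$. Then $X\oplus(Y\oplus Z)$ is
\[
\bigl(\alpha_X+\beta_X\alpha_Y+\beta_X\beta_Y\alpha_Z,\ \beta_Z\beta_Y\beta_X\bigr).
\]

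Comparing the two results componentwise, the second components agree by associativity and commutativity of multiplication. The first components agree by distributivity together with commutativity of $\beta_X\beta_Y=\beta_Y\beta_X$. This finishes the argument.

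\textbf{Structural explanation.} I would also note why the identity holds. Identify $(\alpha,\beta)$ with the affine map $f(a)=\alpha+\beta a$, which is exactly one step $A_t=\alpha_t+\beta_t A_{t+1}$ of the recurrence~\eqref{eq:recurrence}. Then
\[
\bigl((\alpha_B,\beta_B)\oplus(\alpha_A,\beta_A)\bigr)(a)=\alpha_B+\beta_B(\alpha_A+\beta_A a)=f_B\circ f_A\,(a).
\]
So $\oplus$ is function composition, which is always associative. Put another way, the tuple corresponds to the matrix $\begin{pmatrix}\beta&\alpha\\0&1\end{pmatrix}$, and $\oplus$ corresponds to matrix multiplication.

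The only point needing care is the main obstacle, such as it is: the ordering convention. The product is written as $\beta_A\beta_B$ rather than $\beta_B\beta_A$, so the verification relies on commutativity of the scalars. For scalar coefficients, as in all seven algorithms here, this is harmless. Associativity itself does not need commutativity, however, because composition order is what matters; the composition view makes this transparent and covers the general case.
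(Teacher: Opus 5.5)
Your proposal is correct and is essentially the paper's proof: both expand the two bracketings of three arbitrary tuples and compare componentwise, with the first components matching via $\beta_Y\beta_X=\beta_X\beta_Y$ (the paper's version of this step is $\beta_B\beta_C\alpha_A=\beta_C\beta_B\alpha_A$); your affine-composition and matrix remark is an addition the paper does not make. One caution about that remark: it ``covers the general case'' only if the second component is written $\beta_B\beta_A$, because with the paper's $\beta_A\beta_B$ the operator is genuinely non-associative for noncommuting coefficients (your own expansion then gives first components $\beta_Y\beta_X\alpha_Z$ versus $\beta_X\beta_Y\alpha_Z$), and your displayed identification with $f_B\circ f_A$ already uses commutativity.
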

\begin{proof}
For any three tuples $(\alpha_C, \beta_C)$, $(\alpha_B, \beta_B)$,
$(\alpha_A, \beta_A)$:
\begin{align*}
  \bigl[(\alpha_C,\beta_C)\oplus(\alpha_B,\beta_B)\bigr]\oplus(\alpha_A,\beta_A)
  &= (\alpha_C+\beta_C\alpha_B,\;\beta_B\beta_C)\oplus(\alpha_A,\beta_A) \\
  &= (\alpha_C+\beta_C\alpha_B+\beta_B\beta_C\alpha_A,\;\beta_A\beta_B\beta_C).
\end{align*}
\begin{align*}
  (\alpha_C,\beta_C)\oplus\bigl[(\alpha_B,\beta_B)\oplus(\alpha_A,\beta_A)\bigr]
  &= (\alpha_C,\beta_C)\oplus(\alpha_B+\beta_B\alpha_A,\;\beta_A\beta_B) \\
  &= (\alpha_C+\beta_C(\alpha_B+\beta_B\alpha_A),\;\beta_A\beta_B\beta_C) \\
  &= (\alpha_C+\beta_C\alpha_B+\beta_C\beta_B\alpha_A,\;\beta_A\beta_B\beta_C).
\end{align*}
Both expressions are equal.
\end{proof}

For the five backward recurrences, the kernel loads the sequence in reverse
chronological order: reversed index $k=0$ corresponds to timestep $t=T-1$,
while $k=T-1$ corresponds to $t=0$. The associative scan therefore operates
in late-to-early time order, and the result at reversed index $k$ corresponds
to chronological timestep $t=T-1-k$. Eligibility traces and episodic prefix
sums instead scan the sequence in its original chronological order.

\subsection{Triton and the GPU Memory Hierarchy}

Triton \citep{tillet2019triton} is a domain-specific language and compiler for
writing GPU kernels in Python-like syntax.  It abstracts over warp-level
programming while exposing the key performance-relevant features of the GPU
memory hierarchy:

\begin{itemize}
  \item \textbf{HBM (High Bandwidth Memory):} Main GPU memory,
        $\sim$2--3.35\,TB/s bandwidth on datacenter HBM3 parts such as the
        80GB H100 used in this paper's evaluation, latency on the order of
        several hundred cycles.  Accessible by all SMs  (The RTX~2000~Ada also evaluated
        here uses 16GB of GDDR6 instead of HBM, at substantially lower
        bandwidth -- Section~\ref{sec:results} reports results on both GPUs).
  \item \textbf{SRAM (Shared Memory):} Per-SM scratchpad, $\sim$5--10$\times$
        higher bandwidth than HBM, very low latency.  Configurable capacity
        up to $\sim$100\,KB per SM on Ada-generation SMs (RTX~2000~Ada) and
        up to $\sim$228\,KB per SM on H100.
  \item \textbf{Registers:} Per-thread storage, zero-latency access.
        Arithmetic occurs here.  64K 32-bit registers per SM on both GPUs
        evaluated in this paper, shared across all resident threads.
\end{itemize}

Credit assignment kernels are memory-bandwidth bound rather than compute
bound \citep{williams2009roofline}: the arithmetic intensity (FLOPs per byte
of HBM traffic) is low, so reducing HBM accesses directly reduces runtime.
The performance advantage of a fused kernel therefore comes from
\textbf{minimizing HBM round-trips} \citep{dao2022flashattention}.  A fused 
implementation keeps the scan state on-chip between stages, avoiding
the intermediate HBM round-trips required by a materialized scan. The associative 
scan keeps intermediate tuples on-chip throughout the $O(\log T)$ reduction, 
with no HBM round-trip until the final output store.

Triton's \texttt{tl.associative\_scan} primitive maps directly to this pattern,
allowing the kernel to express the tree reduction at a high level while the
Triton compiler generates efficient hardware-specific reduction instructions
for both NVIDIA (warp shuffles) and AMD (wavefront operations) GPUs.

\section{The Unified Linear Recurrence Framework}
\label{sec:framework}

\subsection{The Shared Scan Operator}

All seven algorithms use the same associative scan operator and combine
function. For the five backward recurrences, the kernel presents
$(\alpha_t,\beta_t)$ to the scan in reverse chronological order and writes
the result back in original time order. Eligibility traces and episodic
prefix sums instead scan the original sequence in forward order.

For the backward algorithms, the boundary carry $A_T$ is
algorithm-dependent rather than a universal constant;
Section~\ref{sec:boundaries} derives this condition, and
Table~\ref{tab:algorithms} lists it for each algorithm. The scan incorporates
this carry after evaluating the recurrence with
\texttt{tl.associative\_scan} and the combine function in
Equation~\ref{eq:combine}. The forward scans use the same combine function
but require no successor-value bootstrap.

For the common case ($\texttt{seq\_len} \le 131072$), each Python API
dispatches to an algorithm-specific fused kernel that receives the raw
rollout tensors (rewards, values, done flags, importance ratios, etc.) and
constructs $\alpha_t$ and $\beta_t$ in registers; no PyTorch operation
materializes them as intermediate tensors. For longer sequences, the
backward-scan algorithms use the unfused chunked fallback described in
Section~\ref{sec:chunked}. Table~\ref{tab:algorithms} summarizes the
per-algorithm mapping from rollout quantities to $\alpha_t$ and $\beta_t$.

\begin{table}[t]
\centering
\caption{Algorithm-specific coefficients for the shared affine scan,
         applied in reverse or forward time as appropriate.
         $d^{\mathrm{term}}_t, d^{\mathrm{trunc}}_t \in \{0,1\}$ flag
         termination and truncation; $d_t = d^{\mathrm{term}}_t \vee
         d^{\mathrm{trunc}}_t$ is the flag used in $\beta_t$, combining
         both.  $\Delta_t = v_t - V(s_t)$ is the V-Trace value
         delta (Section~\ref{sec:vtrace}).  The \textbf{Bootstrap} column
         gives each algorithm's scan boundary carry (Section~\ref{sec:boundaries}
         explains why it is $0$ for GAE/V-Trace/Retrace($\lambda$) but
         nonzero for TD($\lambda$)/discounted returns).  Eligibility traces
         use vector notation since the trace is vector-valued; see
         Section~\ref{sec:results-limitations} for per-call scalar
         handling.}
\label{tab:algorithms}
\renewcommand{\arraystretch}{1.4}
\small
\begin{tabularx}{\textwidth}{@{}l X X l@{}}
\toprule
\textbf{Algorithm} & $\boldsymbol{\alpha_t}$ & $\boldsymbol{\beta_t}$ & \textbf{Bootstrap} \\
\midrule
GAE
  & $r_t + \gamma(1-d^{\mathrm{term}}_t)V(s_{t+1}) - V(s_t)$
  & $\gamma\lambda(1-d_t)$
  & $A_T = 0$ \\
V-Trace
  & $\rho_t\!\left(r_t + \gamma(1-d^{\mathrm{term}}_t)V(s_{t+1}) - V(s_t)\right)$
  & $\gamma c_t(1-d_t)$
  & $\Delta_T = 0$ \\
Retrace($\lambda$)
  & $r_t + \gamma(1-d^{\mathrm{term}}_t)\mathbb{E}_\pi[Q(s_{t+1},\cdot)] - Q(s_t,a_t)$
  & $\gamma c_{t+1}(1-d_t)$
  & $0$ \\
TD($\lambda$)
  & $r_t + \gamma(1-\lambda)(1-d^{\mathrm{term}}_t)V(s_{t+1})$
  & $\gamma\lambda(1-d_t)$
  & $G^\lambda_T$ \\
Disc.\ Returns
  & $r_t$
  & $\gamma(1-d_t)$
  & $G_T$ \\
Elig.\ Traces (fwd)
  & $\nabla_{\mathbf{w}}\hat{V}(s_t,\mathbf{w}_t)$
  & $\gamma\lambda(1-d_{t-1})$
  & $\mathbf{z}_{-1} = \mathbf{0}$ \\
Prefix Sum (fwd)
  & $x_t$
  & $1 - d_{t-1}$
  & $0$ \\
\bottomrule
\end{tabularx}
\end{table}

\subsection{Episode and Rollout-Window Boundary Semantics}
\label{sec:boundaries}

Credit-assignment quantities must not propagate across episode boundaries
within a rollout buffer. \rltrp{} distinguishes termination and truncation,
which both sever the scan carry but differ in their bootstrap semantics, from
the rollout-window boundary at $t=T-1$. Eligibility traces and episodic prefix
sums do not use value bootstrapping; for these algorithms, episode boundaries
only reset the recurrence.

\paragraph{Terminated steps} ($d^{\mathrm{term}}_t = 1$).
The episode ended naturally. For algorithms that bootstrap from a successor
value, $V(s_{t+1})$ is conceptually zero and is therefore removed from the
local update through $(1-d^{\mathrm{term}}_t)$. The decay $\beta_t$ is also
zeroed through $(1-d_t)$, preventing the scan carry from crossing into the
next episode.

\paragraph{Truncated steps} ($d^{\mathrm{trunc}}_t = 1$).
The episode was cut short but continues in principle. The value stored at
$t+1$ belongs to the next episode in the rollout buffer and therefore cannot
serve as the continuation value. For algorithms that bootstrap from
$V(s_{t+1})$, the caller instead supplies the correct continuation through
\texttt{bootstrap\_values[env, t]}; this contribution is incorporated into
$\alpha_t$ with the appropriate discount. As for termination,
$\beta_t=0$ severs the scan carry, but does not remove the bootstrap already
encoded in $\alpha_t$.

\paragraph{Rollout-window boundary} (final step $t=T-1$).
If the rollout ends in the middle of an episode, its successor state $s_T$
lies outside the buffer. For GAE, V-Trace, and Retrace($\lambda$), the supplied
$V(s_T)$ enters the local update at $t=T-1$, while the scan boundary carry
remains
\[
    A_T = 0.
\]
A nonzero carry would represent continuation of the trace recurrence beyond
the sampled window; the required one-step bootstrap has already been included
in $\alpha_{T-1}$. For TD($\lambda$) and discounted returns, the supplied continuation value at
$s_T$ is instead distributed between $\alpha_{T-1}$ and the boundary carry
$A_T$, with coefficients that sum to the required discounted continuation
(Section~\ref{sec:tdlambda}). Eligibility traces and episodic prefix sums do
not require a successor-value bootstrap.

If $t=T-1$ is itself truncated, it is handled as a truncation rather than as
an ordinary non-terminal window edge: the supplied continuation value enters
through the truncation term in $\alpha_{T-1}$, while
$\beta_{T-1}=0$ suppresses any contribution from the scan carry. The bootstrap
is therefore not double-counted.

Thus, termination and truncation both reset the recurrence, but termination
uses a zero successor value whereas truncation uses a supplied continuation
value. A non-terminal rollout-window edge does not represent an episode
boundary; its continuation value enters through $\alpha_{T-1}$, the boundary
carry $A_T$, or both, depending on the algorithm
(Section~\ref{sec:algorithms}).

\subsection{Algorithm-Specific Details}
\label{sec:algorithms}

The following subsections specify $\alpha_t$ and $\beta_t$ for each algorithm.
Episode-boundary semantics -- how terminated steps, truncated steps, and the
window boundary are handled -- are common to all algorithms and described in
Section~\ref{sec:boundaries}; they are not repeated here.

\subsubsection{Generalized Advantage Estimation (GAE)}

GAE \citep{schulman2016gae} computes the advantage as the exponentially-weighted
mixture of $n$-step TD residuals:
\[
  A_t^{\mathrm{GAE}} = \sum_{k=0}^{\infty}(\gamma\lambda)^k\delta_{t+k}.
\]
This telescopes to the backward recurrence $A_t = \delta_t + \gamma\lambda \cdot
A_{t+1}$.  With episode-boundary masking:
\[
  \alpha_t=\delta_t = r_t + \gamma(1-d^{\mathrm{term}}_t)V(s_{t+1}) - V(s_t),
  \qquad
  \beta_t = \gamma\lambda(1-d_t).
\]
The $\lambda=0$ case reduces GAE to a pure one-step TD advantage (low variance,
high bias); $\lambda=1$ yields the full Monte Carlo return minus the baseline
(low bias, high variance).

\subsubsection{V-Trace}
\label{sec:vtrace}

V-Trace \citep{espeholt2018impala} corrects for policy lag in distributed
training by weighting each TD error with a clipped importance sampling (IS)
ratio.  Let $\pi$ be the current learner policy and $\mu$ the behaviour policy
that collected the data.  Define the per-step IS ratio
$\rho^{\mathrm{raw}}_t = \pi(a_t|s_t)/\mu(a_t|s_t)$, then clip it at two
separate thresholds:
\[
  \rho_t = \min\!\left(\bar{\rho},\,\rho^{\mathrm{raw}}_t\right),
  \qquad
  c_t   = \min\!\left(\bar{c},\,\rho^{\mathrm{raw}}_t\right).
\]
$\bar{\rho}$ controls the bias--variance trade-off of the value target:
a smaller $\bar{\rho}$ limits how far the target can deviate from the
on-policy value, reducing variance at the cost of some bias.
$\bar{c}$ controls the multi-step trace length: it multiplies along the
return trajectory in $\beta_t = \gamma c_t(1-d_t)$, so a smaller $\bar{c}$
shortens the effective lookahead, trading variance for bias.
Typical defaults are $\bar{\rho} = 1$ and $\bar{c} = 1$
\citep{espeholt2018impala}.
The V-Trace target delta satisfies $\Delta_t = \delta^V_t + \gamma c_t \cdot
\Delta_{t+1}$, mapping to:
\[
  \alpha_t = \delta^V_t = \rho_t\!\left(r_t + \gamma(1-d^{\mathrm{term}}_t)V(s_{t+1}) - V(s_t)\right),
  \qquad
  \beta_t = \gamma c_t(1-d_t).
\]
After the scan, targets are formed as $v_t = \Delta_t + V(s_t)$ and advantages
as $A_t = \rho_t(r_t + \gamma v_{t+1} - V(s_t))$, all computed within the same
fused kernel.

\subsubsection{Retrace(\texorpdfstring{$\lambda$}{lambda})}

Retrace($\lambda$) \citep{munos2016retrace} extends V-Trace to Q-value
estimation with an \emph{index shift}: the decay coefficient applying to
$\Delta_{t+1}$ is $c_{t+1}$ (next-step importance weight) rather than $c_t$:
\[
  \Delta_t = \delta_t + \gamma c_{t+1}\cdot\Delta_{t+1}.
\]
This shift reflects that Retrace evaluates a state-action pair $(s_t,a_t)$:
the action $a_t$ is fixed as the subject of evaluation, so off-policy
correction begins only from the next action $a_{t+1}$ onward. Because the
kernel already loads $(\alpha_t,\beta_t)$ in reverse chronological order
(Section~\ref{sec:framework}), obtaining $c_{t+1}$ requires no separate
pre-shifted copy of the action-probability arrays. The same reverse-order load,
offset by one position, provides $\pi(a_{t+1}\mid s_{t+1})$ and
$\mu(a_{t+1}\mid s_{t+1})$ directly in registers alongside
$\pi(a_t\mid s_t)$ and $\mu(a_t\mid s_t)$
(Section~\ref{sec:fused}).

The boundary carry is fixed at zero, $\Delta_T=0$.
The expected Q-values at $s_T$ are already encoded in $\alpha_{T-1}$.
Moreover, $\beta_{T-1}=\gamma c_T(1-d_{T-1})$ contains $c_T$, which would
require the behaviour-policy probability of an action at $s_T$ that was never
sampled; we therefore take $c_T=0$. Interior values $\Delta_t$ are nonzero in
general and are computed by the scan.

\subsubsection{TD(\texorpdfstring{$\lambda$}{lambda}) Returns and Discounted Returns}
\label{sec:tdlambda}

TD($\lambda$) returns \citep{sutton2018rl} interpolate between one-step TD
($\lambda=0$) and Monte Carlo ($\lambda=1$):
\[
  G^\lambda_t = r_t + \gamma\!\left[(1-\lambda)V(s_{t+1}) + \lambda G^\lambda_{t+1}\right].
\]
Expanding and grouping into $\alpha_t + \beta_t \cdot G^\lambda_{t+1}$ form:
\[
  \alpha_t = r_t + \gamma(1-\lambda)(1-d^{\mathrm{term}}_t)V(s_{t+1}),
  \qquad
  \beta_t = \gamma\lambda(1-d_t).
\]
Discounted returns \citep{sutton2018rl} are the special case $\lambda=1$ with
no value function: $\alpha_t = r_t$, $\beta_t = \gamma(1-d_t)$.  Two boundary
cases require separate treatment.  At \emph{interior truncated steps}
($d^{\mathrm{trunc}}_t=1$, $t<T{-}1$), $\beta_t=0$ severs the carry, so the
supplied continuation value must enter $\alpha_t$ directly. For
TD($\lambda$), this gives
$\alpha_t=r_t+\gamma V(s^{\mathrm{true}}_{t+1})$.  At the \emph{window
boundary} ($t = T{-}1$, continuing episode), $\beta_{T-1} = \gamma\lambda$ and
the bootstrap is passed as the scan carry $G_T = V(s_T)$, entering through
both terms at once:
\[
  G_{T-1} = \underbrace{r_{T-1} + \gamma(1-\lambda)V(s_T)}_{\alpha_{T-1}}
          + \gamma\lambda\,V(s_T)
          = r_{T-1} + \gamma V(s_T).
\]
The two coefficients on $V(s_T)$ total $\gamma$, pricing the bootstrap in
exactly once.

\subsubsection{Eligibility Traces}

Eligibility traces are one of the two \textbf{forward} recurrences in the
library:
\[
  \mathbf{z}_t = \gamma\lambda(1-d_{t-1})\,\mathbf{z}_{t-1}
                 + \nabla_{\mathbf{w}}\hat{V}(s_t,\mathbf{w}_t), \qquad d_{-1} := 0.
\]
This maps to $\alpha_t = \nabla_{\mathbf{w}}\hat{V}(s_t,\mathbf{w}_t)$,
$\beta_t = \gamma\lambda(1-d_{t-1})$: the carry into $t$ is severed by the
\emph{preceding} step's flag, since $d_t$ marks where an episode ends, not
where the next one starts.  Unlike the backward algorithms, the sequence is
loaded in \emph{chronological} order ($t = 0 \to T{-}1$, past to present) so
each position accumulates a weighted sum of past gradients rather than
future TD errors.  The same combine function applies; only the load order
differs.  For linear function approximation,
$\nabla_{\mathbf{w}}\hat{V}(s_t,\mathbf{w}_t) = x_t$ (the feature vector).

\subsubsection{Episodic Prefix Sum}

The episodic prefix sum (segmented scan) is the second \textbf{forward} scan and computes a running total that resets
at segment boundaries.  Two boundary conventions are supported via a \texttt{boundary} argument,
depending on whether the flag marks the end of the preceding segment or the
start of the new one.  The default, \texttt{boundary="ends\_at"}, matches the rollout convention used
by the other kernels in this library:
\[
  C_t = x_t + (1-d_{t-1})\,C_{t-1}, \qquad d_{-1} := 0,
\]
mapping to $\alpha_t = x_t$, $\beta_t = 1-d_{t-1}$.  Passing
\texttt{boundary="starts\_at"} instead resets \emph{at} $t$
($\beta_t = 1-d_t$), for callers whose boundary flag marks a new segment's
first element directly rather than an ending one.

\section{Implementation}

\subsection{Kernel Architecture}

Each environment row is processed by one Triton program instance, indexed by
\texttt{tl.program\_id(0)}. Within the program, a block of
\texttt{BLOCK\_SIZE} timestep elements is distributed across the program's
warps and processed in registers and on-chip memory \citep{kirk2016programming}. Thus, parallelism is
exposed both across environment rows through the launch grid and across
timesteps within each program.

The generic backward kernel consumes precomputed $\alpha$ and $\beta$ tensors
and is used by the chunked fallback in Section~\ref{sec:chunked}. For
$\texttt{seq\_len}\le131072$, the algorithm-specific fused kernels instead
load the raw rollout tensors and construct $\alpha_t$ and $\beta_t$ in-register,
avoiding their materialization in HBM. Both paths then follow the same scan
procedure:
\begin{enumerate}
\item \textbf{Load} (HBM $\to$ registers): Load the required inputs for
      $\texttt{BLOCK\_SIZE}$ timesteps, in reverse order for backward scans,
      using \texttt{tl.arange}-based offsets. Masked loads handle positions
      beyond the actual sequence length.

\item \textbf{Construct $\mathbf{\alpha}$, $\mathbf{\beta}$} (registers): Fused kernels derive
      $\alpha_t$ and $\beta_t$ from the raw inputs; the generic kernel receives
      them directly.

\item \textbf{Scan} (on-chip): Apply
      \texttt{tl.associative\_scan(($\alpha$, $\beta$), axis=0,
      combine\_fn=\_combine)}, giving $O(\log \texttt{BLOCK\_SIZE})$
      dependency depth. Out-of-range positions are excluded by the load/store
      masks.

\item \textbf{Bootstrap} (registers): Incorporate the algorithm-dependent
      boundary carry $A_T$ as
      \texttt{scan\_result + decay\_product * bootstrap}
      (Section~\ref{sec:boundaries}).

\item \textbf{Store} (registers $\to$ HBM): Write the results in original
      time order using the reversed offsets.
\end{enumerate}

\subsection{The Fused Kernel Pattern}
\label{sec:fused}

Fusion eliminates the HBM materialization of $\alpha$ and $\beta$ between
preprocessing and the scan. Table~\ref{tab:hbm} illustrates the resulting
traffic reduction for GAE with bootstrap values; the same pattern applies to
the other fused kernels.

\begin{table}[h]
\centering
\caption{HBM accesses for GAE (with bootstrap values, no truncations).
  $r$: rewards; $v$: values; $d$: terminateds; $bs$: bootstrap values;
  $\delta$: TD residuals; $\beta$: decay scalars; $A$: advantages.}
\label{tab:hbm}
\begin{tabularx}{\linewidth}{Xcc}
\toprule
Step & Reads & Writes \\
\midrule
\multicolumn{3}{l}{\textit{Fused pipeline (single Triton kernel)}} \\
\quad Load $r$, $v$ (read twice at offsets \texttt{rev} and \texttt{rev+1}),
  $d$, $bs$; compute $\alpha$, $\beta$ in-register; write $A$ & 5 & 1 \\
\quad \textbf{Total} & \multicolumn{2}{c}{\textbf{6}} \\
\midrule
\multicolumn{3}{l}{\textit{Unfused pipeline (three GPU operations)}} \\
\quad Compute $\delta = r + \gamma v_{t+1}(1{-}d) - v_t$ ($\delta$ is GAE's
  $\alpha_t$; $v_{t+1}$ reads $bs$ at the boundary column, a 5th tensor) & 5 & 1 \\
\quad Compute $\beta = \gamma\lambda(1{-}d)$ & 1 & 1 \\
\quad Scan: $(\delta,\beta) \to A$ & 2 & 1 \\
\quad \textbf{Total} & \multicolumn{2}{c}{\textbf{11}} \\
\bottomrule
\end{tabularx}
\end{table}

\noindent \noindent
For GAE, fusion reduces the counted HBM accesses from 11 to 6 by eliminating
the write and subsequent read of $\delta$ and $\beta$. Compile-time
\texttt{constexpr} flags further specialize common cases:
\begin{itemize}
  \item \texttt{HAS\_TRUNCATIONS=False}: skips the truncateds array and
        the 2D \texttt{bootstrap\_values[num\_envs, seq\_len]} tensor (used
        for per-step truncation bootstraps), saving two HBM reads.  The
        window-boundary bootstrap is still supported via a separate 1D
        scalar argument \texttt{bootstrap\_ptr[num\_envs]}, controlled by
        \texttt{HAS\_BOOTSTRAP}.
  \item \texttt{HAS\_BOOTSTRAP=False}: substitutes \texttt{0.0} for the
      window-boundary bootstrap and avoids materializing an all-zero
      \texttt{bootstrap\_ptr}. Relative to the previous path, this reduces
      full-call time by ${\sim}20$--$27\%$ at small shapes on both GPUs
      evaluated here.\footnote{The isolated benchmark compares the previous
      \texttt{torch.zeros(num\_envs)} path with the current
      \texttt{HAS\_BOOTSTRAP=False} path, saving ${\sim}8$--$10\,\mu$s
      (\texttt{benchmarks/measure\_bootstrap\_skip.py}).}
\end{itemize}

\subsection{Chunked Fallback for Long Sequences}
\label{sec:chunked}

The flat kernel is limited to $\texttt{seq\_len} \le 131072$ (the maximum
\texttt{BLOCK\_SIZE} supported by Triton's \texttt{tl.associative\_scan}).  For
longer sequences, backward algorithms (GAE, V-Trace, Retrace, TD($\lambda$),
discounted returns) fall back to a chunked PyTorch path that processes the
sequence in blocks and propagates the carry between blocks.  The eligibility trace 
and prefix-sum forward scans do not have a chunked fallback: a
left-to-right chunked forward scan is structurally feasible but was not
implemented because sequences longer than $131072$ timesteps are very
uncommon in on-policy RL, making the added implementation complexity hard
to justify.

The chunked path is unfused: it materializes $\alpha$ and $\beta$ as full
tensors before invoking the scan.  At $\texttt{seq\_len} > 131072$,
throughput is dominated by global memory bandwidth rather than kernel launch
overhead, so the fusion benefit is smaller and the additional implementation
complexity of a fused chunked kernel is not justified for typical RL workloads.

\subsection{Numerical Precision}
\label{sec:precision}

All kernels require \texttt{float32} inputs and raise a \texttt{TypeError} on
other dtypes.  The scan accumulates $T$ additions over potentially thousands of
timesteps; \texttt{bfloat16}'s 7 mantissa bits cause meaningful numerical drift
at large $T$ compared to \texttt{float32}'s 23 mantissa bits.  In
mixed-precision training pipelines (\texttt{torch.autocast}), the recommended
pattern is to cast advantage inputs to \texttt{float32} before the scan and
allow the policy forward pass to run in \texttt{bfloat16}.

\section{Correctness Analysis}

\subsection{Algebraic Verification}
\label{sec:algebraic-verification}

Because $\oplus$ is associative (Section~\ref{sec:scans}), the $T$
per-timestep tuples can be reduced in any grouping without changing the
result: a $\log_2(T)$-depth tree reduction yields the same accumulation as
the sequential scan, in $\log_2(T)$ dependent steps rather than $T$.
Concretely, for a 4-step window the reduction ends with one thread holding
a single accumulated tuple spanning all four timesteps; the $\alpha$
component of that tuple works out to
\[
  A_0 = \alpha_0 + \beta_0\bigl(\alpha_1 + \beta_1(\alpha_2 + \beta_2\alpha_3)\bigr),
\]
the exact sequential expansion of $A_t = \alpha_t + \beta_t A_{t+1}$ -- the value 
at the first chronological timestep, depending on all four local recurrence terms.  Every
combine pairs a chronologically-earlier tuple with a chronologically-later
one, per Equation~\ref{eq:combine}, so
Section~\ref{sec:boundary-correctness}'s boundary-severing argument (stated
directly in these same $t$/$t{+}1,\ldots$ terms) applies here unchanged.
Appendix~\ref{sec:appendix-scan-trace} gives the full thread-by-thread trace.

\subsection{Episode Boundary Correctness}
\label{sec:boundary-correctness}

When $d_t = 1$ at a boundary step, $\beta_t = 0$, and the operator severs
the carry.  Combining step $t$'s own tuple $(\alpha_t, 0)$ with
$(\alpha_{t+1..}, \beta_{t+1..})$ -- the tuple already accumulated for
steps $t{+}1,\ldots$ -- gives
\[
  (\alpha_t, 0) \oplus (\alpha_{t+1..}, \beta_{t+1..})
  = (\alpha_t + 0\cdot\alpha_{t+1..},\;0\cdot\beta_{t+1..})
  = (\alpha_t,\;0).
\]
Everything accumulated for later timesteps is multiplied by zero, and the
resulting $\beta$ is zero, so no credit propagates backward past $t$.  The
scan result at $t$ is therefore unaffected by any following episode across
a done flag.

The tree reduction therefore reproduces the sequential recurrence, while 
$\beta_t=0$ prevents propagation across episode boundaries. For the two 
forward scans, the same argument applies with the shifted boundary coefficient: 
when $d_{t-1}=1$, $\beta_t=0$ prevents carry from the preceding episode from 
entering timestep $t$.

\section{Benchmarking Methodology}

\subsection{Measurement Protocol}
\label{sec:measurement}

All benchmarks use CUDA event timing with the following protocol:
\begin{enumerate}
  \item \textbf{Warmup} (20 iterations, untimed): absorbs Triton JIT
        compilation, autotuning, and first-touch allocation.
  \item \textbf{Timed iterations} (50 per trial, 5 trials): each iteration is
        individually timed with CUDA event synchronization before start and
        after stop.
  \item \textbf{Robust estimator}: each trial reports its median over 50
        iterations; the final result is the minimum of 5 trial medians.
\end{enumerate}

Credit-assignment errors are often finite and plausible-looking rather than
crashes -- for example, seeding the window carry with the bootstrap for GAE
would double-count $V(s_T)$ at every position (Section~\ref{sec:boundaries})
-- so they survive testing that checks only for NaN/Inf.  Every kernel is
therefore validated against an independent sequential reference
implementation (\texttt{atol=rtol=1e-4}) before timing results are included 
in the evaluation.

Following established parallel benchmarking practice
\citep{hoefler2015benchmarking}, we report the minimum of five independently
warmed trial medians to reduce sensitivity to occasional interference from
frequency changes or OS scheduling. Such fixed-overhead events
disproportionately affect short kernels and can otherwise produce
non-monotonic measurements across problem sizes.

All GPU benchmarks (both the H100 and the RTX~2000~Ada in
Section~\ref{sec:results}) were run on RunPod-hosted cloud instances, on
\texttt{torch==2.4.1+cu124} with Triton 3.0.0.

\subsection{Baseline}
\label{sec:baseline}

The baseline uses a linear-space scan:
\texttt{parallel\_suffix\_scan}/\texttt{parallel\_prefix\_scan}, a
$\log_2(T)$-doubling associative scan with no $\log$/$\exp$
anywhere,\footnote{A log-space formulation based on cumulative sums of
$\log(\beta_t)$ was also considered but rejected because repeated termination
resets underflow in \texttt{float32}. At the termination rate used in these
benchmarks, two to three terminations are sufficient to push the running sum
below the representable range, producing \texttt{inf}/\texttt{nan} outputs.
We therefore use the linear-space scan.} verified the same way.  All seven baselines
share the same $\log_2(T)$-doubling engine -- \texttt{parallel\_suffix\_scan}
for the backward algorithms, \texttt{parallel\_prefix\_scan} (its direct
forward-recurrence mirror) for eligibility traces and prefix sum -- so the
comparison uses one consistent baseline strength across algorithms.


\paragraph{Baseline scope} The results below are relative to this implementation; 
they do not establish optimality among PyTorch implementations. The doubling-scan 
baseline pays 6--12 kernel launches per call (one per doubling step, growing with 
\texttt{seq\_len}), against 1--2 launches for the \rltrp{} Triton kernel it is 
compared against.  This launch-count gap is architectural: 
\texttt{torch.compile}/Inductor does not fuse this elementwise-then-scan pattern 
into as few kernel launches as a hand-written Triton kernel can.

Every ``$N\times$ vs.\ \texttt{torch.compile}'' ratio reported in
Section~\ref{sec:results} is measured against this specific baseline.

\subsection{Performance Regression Guard}

The library includes a CI performance test that checks a GPU-specific minimum
speedup over the vectorized baseline at 128 environments and 1024 steps. For
each GPU, the threshold is set to 90\% of the minimum observed across three
independent runs. Across the currently calibrated H100, H200, and
RTX~2000~Ada GPUs, the lowest threshold is 1.61$\times$ (Retrace on
RTX~2000~Ada).

\section{Results}
\label{sec:results}

All results use the v0.1.3 benchmark data,\footnote{Full per-shape tables,
measurement scripts, and implementation notes are available with the library
at \url{https://github.com/simonsays1980/rl-triton}.} measured under the
protocol in Section~\ref{sec:measurement} against the baseline in
Section~\ref{sec:baseline}. We report results on an NVIDIA H100 80GB HBM3 and
an NVIDIA RTX~2000~Ada Generation. Unless stated otherwise, speedups are
full-call ratios for the complete \texttt{compute\_*(tensors) -> tensors}
invocation, including launch and wrapper overhead. Device-only ratios are
higher (1.76--13.19$\times$ at the headline configuration versus
1.6--5.70$\times$ full-call), so we report full-call ratios throughout.

\subsection{Massively-Parallel-Simulation Headline}
\label{sec:results-headline}

Table~\ref{tab:headline} reports all seven algorithms at
$\texttt{num\_envs}{=}4096$, $\texttt{seq\_len}{=}128$: the
massively-parallel-simulation regime this library targets (e.g.,
Isaac Gym/Isaac Lab-style environment counts at short rollout horizons).
It reports three tiers per algorithm -- the naive loop, the
vectorized-and-compiled scan, and the fused Triton kernel -- plus the
with-truncations path (eligibility traces and prefix sum have no
truncation-aware variant).  A sequential NumPy CPU loop is also measured
but not tabulated; it is several times slower than the GPU
loop.\footnote{E.g.\ for GAE at $\texttt{num\_envs}{=}4096$: 180ms (H100) /
56ms (RTX~2000~Ada) for the NumPy loop vs.\ 21ms / 9ms for the GPU loop.}

\begin{table}[t]
\centering
\caption{Production-regime speedup ($\texttt{num\_envs}{=}4096$,
  $\texttt{seq\_len}{=}128$), full-call wall time, v0.1.3 release
  measurements.  \emph{Loop} is the Triton kernel's speedup over the naive
  uncompiled sequential-loop baseline.  \emph{Plain} and \emph{With trunc.}
  are its speedup over the vectorized \texttt{torch.compile}
  baseline (Section~\ref{sec:baseline}).  Eligibility traces and prefix sum
  have no with-truncations path.}
\label{tab:headline}
\renewcommand{\arraystretch}{1.2}
\small
\begin{tabularx}{\textwidth}{@{}l X X X X X X@{}}
\toprule
& \multicolumn{3}{c}{\textbf{H100 80GB HBM3}} & \multicolumn{3}{c}{\textbf{RTX 2000 Ada}} \\
\textbf{Algorithm} & Loop & Plain & With trunc. & Loop & Plain & With trunc. \\
\midrule
GAE                & 317$\times$ & 2.36$\times$ & 1.7$\times$ & 281$\times$ & 2.57$\times$ & 1.6$\times$ \\
V-Trace             & 96$\times$ & 2.78$\times$ & 2.7$\times$ & 86$\times$ & 3.46$\times$ & 3.3$\times$ \\
Retrace($\lambda$)  & 72$\times$ & 1.91$\times$ & 1.9$\times$ & 16$\times$ & 1.62$\times$ & 1.6$\times$ \\
$\lambda$-returns   & 267$\times$ & 2.85$\times$ & 2.6$\times$ & 251$\times$ & 5.14$\times$ & 4.4$\times$ \\
Discounted returns  & 167$\times$ & 2.70$\times$ & 2.6$\times$ & 171$\times$ & 5.70$\times$ & 4.6$\times$ \\
Eligibility traces  & 191$\times$ & 2.48$\times$ & -- & 207$\times$ & 2.28$\times$ & -- \\
Prefix sum          & 167$\times$ & 2.39$\times$ & -- & 168$\times$ & 2.25$\times$ & -- \\
\bottomrule
\end{tabularx}
\end{table}

The 16--317$\times$ speedup over the naive loop primarily reflects the
reduction from an $O(T)$ sequential recurrence to an $O(\log T)$ associative
scan (Section~\ref{sec:scans}). The vectorized \texttt{torch.compile}
baseline already captures much of this gain. Relative to that baseline,
fusion provides a further 1.6--5.70$\times$ speedup, ranging from
1.6$\times$ for Retrace to 5.70$\times$ for discounted returns
(Table~\ref{tab:headline}).  Discounted
returns and $\lambda$-returns sit at the high end because they are
cheapest per step -- no importance-sampling ratio, fewer value-function
reads than GAE, V-Trace, or Retrace -- so the kernel does less per-step
work while the shared baseline pays the same cost regardless of algorithm.
Figure~\ref{fig:launch-overhead} shows why this regime favors fusion.

\begin{figure}[t]
\centering
\includegraphics[width=\textwidth]{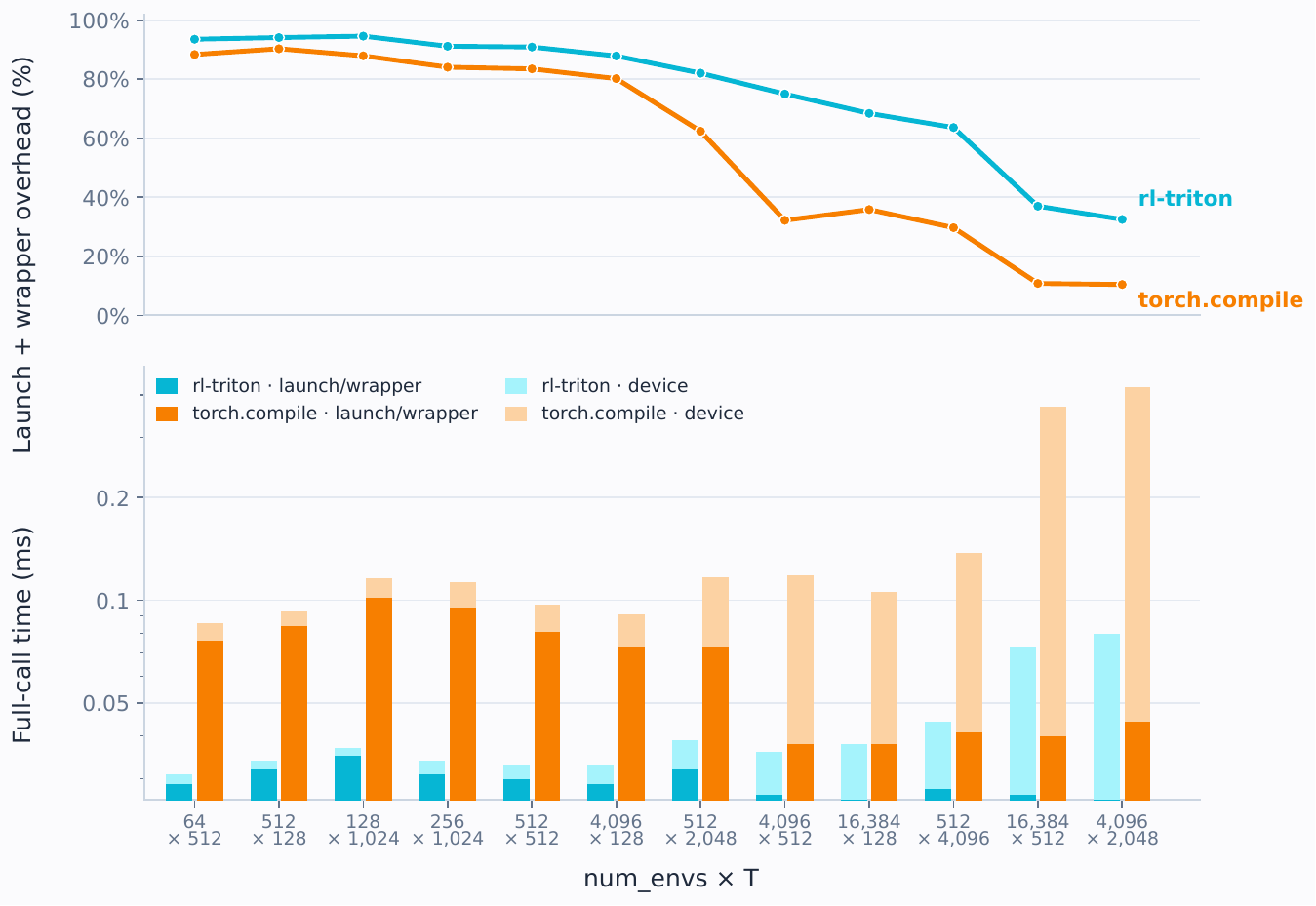}
\caption{Launch/wrapper overhead and device time for GAE (plain path, H100)
  across the twelve shapes in Table~\ref{tab:scaling}.
  \emph{Top}: launch/wrapper overhead as a fraction of full-call time for
  \rltrp{} and the \texttt{torch.compile} baseline, decreasing from
  $\sim$88--95\% at the smallest shapes to $\sim$11--37\% at the largest.
  \emph{Bottom}: full-call time decomposed into launch/wrapper and device
  components. \rltrp{}'s launch/wrapper cost remains
  $\sim$0.026--0.035\,ms, while the baseline's is 1.4--3.1$\times$ larger.}
\label{fig:launch-overhead}
\end{figure}

The smaller speedup on the truncation-aware path arises from a
property of the Triton kernel, not the baseline.  The
\texttt{torch.compile} baseline compiles and runs the with-truncations
graph unconditionally, so its own time is essentially unchanged whether
truncations are all-zero or real.  The Triton kernel, by contrast, uses 
a \texttt{HAS\_TRUNCATIONS} compile-time fast path
(Section~\ref{sec:fused}) that skips the truncateds array and the 2D
bootstrap-values tensor when there are no truncations.  For GAE, V-Trace, 
$\lambda$-returns, and discounted returns, the plain path is
consistently faster in absolute terms and therefore generally achieves a
larger speedup over \texttt{torch.compile}. Retrace shows little systematic
difference between the two paths.

Retrace($\lambda$) gives the smallest speedup at this shape on both GPUs,
reflecting its heavier per-timestep work relative to the other algorithms.
Its with-truncations ratio is still smaller than its plain ratio, like every
other algorithm (1.91$\times \to$ 1.9$\times$ on H100,
1.62$\times \to$ 1.6$\times$ on RTX~2000~Ada), but by a much smaller gap:
the savings from the \texttt{HAS\_TRUNCATIONS} fast path constitute a smaller
fraction of its total runtime. The same effect also narrows its speedup over
the sequential loop, whose HBM round-trips account for a smaller fraction of
the heavier computation.

The relative H100 and RTX speedups are shape-dependent: they vary by algorithm
and can reverse between shapes for the same algorithm
(Section~\ref{sec:results-limitations}). Neither GPU consistently yields a
larger Triton-to-\texttt{torch.compile} speedup.

This massively-parallel-simulation shape ($\texttt{num\_envs}{=}4096$,
$\texttt{seq\_len}{=}128$) is short by design, and it is not where the
kernel's advantage is largest.
The $O(\log T)$-depth argument
(Section~\ref{sec:scans}) predicts the opposite: longer rollouts should
benefit \emph{more}, since the vectorized baseline pays proportionally more
$O(\log T)$ HBM round-trips as $T$ grows while the fused kernel avoids 
intermediate HBM round-trips between scan stages.  We therefore evaluate 
sequence lengths up to 4096; Section~\ref{sec:results-scaling} reports the 
resulting scaling behavior.

\subsection{Full-Grid Scaling: GAE and Retrace}
\label{sec:results-scaling}

Table~\ref{tab:scaling} reports the full shape sweep (twelve
$(\texttt{num\_envs}, \texttt{seq\_len})$ configurations, spanning
$\texttt{num\_envs} \in \{64,\ldots,16384\}$ and
$\texttt{seq\_len} \in \{128,\ldots,4096\}$) for GAE and Retrace($\lambda$) on
both GPUs, plain path, full-call ratio. The corresponding sweeps for the 
remaining five algorithms are reported in Appendix~\ref{sec:appendix-scaling}.

\begin{table}[t]
\centering
\caption{Full shape sweep, plain path, full-call speedup vs.\ the
  vectorized \texttt{torch.compile} baseline (v0.1.3 release
  measurements).}
\label{tab:scaling}
\renewcommand{\arraystretch}{1.15}
\small
\begin{tabularx}{\textwidth}{@{}X X X X X X@{}}
\toprule
& & \multicolumn{2}{c}{\textbf{H100 80GB HBM3}} & \multicolumn{2}{c}{\textbf{RTX 2000 Ada}} \\
\textbf{num\_envs} & \textbf{seq\_len} & GAE & Retrace & GAE & Retrace \\
\midrule
64    & 512  & 2.7$\times$ & 2.2$\times$ & 2.7$\times$  & 2.0$\times$ \\
128   & 1024 & 3.2$\times$ & 2.3$\times$ & 3.1$\times$  & 1.8$\times$ \\
256   & 1024 & 3.3$\times$ & 2.3$\times$ & 3.0$\times$  & 1.4$\times$ \\
512   & 128  & 2.8$\times$ & 2.0$\times$ & 2.5$\times$  & 1.8$\times$ \\
512   & 512  & 2.9$\times$ & 2.2$\times$ & 2.8$\times$  & 1.4$\times$ \\
512   & 2048 & 3.0$\times$ & 1.4$\times$ & 5.7$\times$  & 2.0$\times$ \\
512   & 4096 & 3.2$\times$ & 0.6$\times$ & 6.3$\times$  & 0.8$\times$ \\
4096  & 128  & 2.8$\times$ & 1.9$\times$ & 2.5$\times$  & 1.6$\times$ \\
4096  & 512  & 3.3$\times$ & 1.6$\times$ & 5.3$\times$  & 2.4$\times$ \\
4096  & 2048 & 5.3$\times$ & 1.8$\times$ & 7.8$\times$  & 2.9$\times$ \\
16384 & 128  & 2.8$\times$ & 1.9$\times$ & 4.8$\times$  & 2.3$\times$ \\
16384 & 512  & 5.1$\times$ & 1.9$\times$ & 6.8$\times$  & 2.7$\times$ \\
\bottomrule
\end{tabularx}
\end{table}

GAE remains above $2.5\times$ across the sweep and reaches its largest ratios
at the larger shapes.  Against the uncompiled sequential-loop baseline instead (not tabulated here
at every shape; see Table~\ref{tab:headline} for one representative shape),
GAE's gain grows from 1464$\times$ at $(64,512)$ to 7570$\times$ at
$(512,4096)$ on H100. The loop baseline executes an $O(T)$ serial chain of
GPU operations, with repeated kernel launches and global-memory traffic at
each timestep, whereas the associative scan has only $O(\log T)$ dependency
depth.  Retrace 
behaves differently: at $\texttt{seq\_len}{=}4096$, its speedup over
the doubling-scan baseline falls below $1\times$ (0.6$\times$ on H100 and
0.8$\times$ on RTX~2000~Ada). It remains faster than the uncompiled
sequential loop at every shape in the grid; its minimum loop-baseline speedup
is 43.5$\times$ on H100 and 4.6$\times$ on RTX~2000~Ada.
Section~\ref{sec:results-retrace} analyzes the source of the regression.

\subsection{Retrace's Register-Pressure Regression Above \texorpdfstring{$\texttt{seq\_len}{=}2048$}{seq\_len=2048}}
\label{sec:results-retrace}

Retrace($\lambda$)'s fused kernel reads the
$[\texttt{num\_envs},\texttt{seq\_len},\texttt{num\_actions}]$
action-probability tensor a second time in-kernel to construct $c_{t+1}$,
leaving both the original and shifted values live during the scan. At
$\texttt{seq\_len}{=}4096$, this increases register demand enough to cause
spilling: at $\texttt{seq\_len}{=}4096$ ($\texttt{num\_envs}{=}512$,
$\texttt{num\_actions}{=}4$), \texttt{ptxas} reports 128 registers per thread,
a 400-byte per-thread stack frame, and 456 bytes each of spill stores and
spill loads, at 25\% occupancy.\footnote{\texttt{ptxas} is NVIDIA's
PTX-to-SASS assembler and performs register allocation during compilation;
registers that cannot be allocated are spilled to local memory, with spill
traffic reported in bytes. On H100, each SM provides 64K 32-bit registers
and supports at most 64 resident warps. At 128 registers per thread, one
32-thread warp consumes 4096 registers, limiting an SM to 16 resident warps,
or 25\% occupancy from register pressure alone. The spill traffic does not
enter this occupancy calculation. Measurements use Triton 3.0.0 and
\texttt{ptxas} 12.4.99 targeting \texttt{sm\_90a}; register allocation is
compiler-version dependent
(\texttt{benchmarks/measure\_retrace\_register\_pressure.py}).}

The implementation therefore dispatches longer sequences to the generic
materialize-then-scan path, which computes the 3D probability terms outside
the scan kernel and avoids the same register-pressure limit. The
$\texttt{seq\_len}{=}2048$ threshold is conservative rather than the
crossover between the two \rltrp{} paths: at $\texttt{seq\_len}{=}4096$ the
fused path is still faster than the generic path (0.73$\times$ vs.\
0.47$\times$ relative to \texttt{torch.compile}), whereas the generic path
overtakes it at $\texttt{seq\_len}{=}8192$.

\subsection{End-to-End PPO Speedup}
\label{sec:results-ppo}

We evaluate whether the kernel-level speedups translate into end-to-end PPO
speedups using a synthetic update ($\texttt{seq\_len}{=}128$, 4 epochs
$\times$ 4 minibatches) on an H100. We compare against both the sequential
backward loop used by CleanRL, RLlib, and Sample Factory
(Section~\ref{sec:relatedwork}) and the vectorized \texttt{torch.compile}
baseline of Section~\ref{sec:baseline}, at two policy sizes:

\begin{table}[t]
\centering
\caption{GAE share of a PPO update and end-to-end Triton speedup over each baseline, 
  by policy hidden size ($\texttt{seq\_len}{=}128$).
}
\label{tab:ppo-e2e}
\renewcommand{\arraystretch}{1.2}
\small
\begin{tabularx}{\textwidth}{@{}l l X X X@{}}
\toprule
\textbf{Hidden size} & \textbf{num\_envs} & GAE share / E2E vs.\ loop & GAE share / E2E vs.\ scan \\
\midrule
(1024, 1024) & 4096  & ${\sim}2.2\%$ / ${\sim}1.02\times$ & $<0.1\%$ / ${\sim}1.00\times$ (within noise) \\
(128, 128)   & 16384 & ${\sim}10$--$15\%$ / ${\sim}1.11$--$1.16\times$ & not measured at this configuration$^\dagger$ \\
\bottomrule
\end{tabularx}

\vspace{2pt}
{\footnotesize $^\dagger$The (128, 128) configuration was evaluated only against the loop baseline.}
\end{table}

At the (1024, 1024) policy size, GAE accounts for ${\sim}2.2\%$ of the PPO
update against the loop baseline and less than $0.1\%$ against the vectorized
scan. The corresponding end-to-end speedups are ${\sim}1.02\times$ and
${\sim}1.00\times$, respectively, despite the larger isolated-kernel gains in
Table~\ref{tab:headline}. At (128, 128) (${\sim}150$k parameters), GAE
accounts for 10--15\% of the update against the loop baseline, and the
end-to-end speedup rises to 1.11--1.16$\times$ across nine reruns.

These results closely follow Amdahl's law: eliminating a component that
accounts for 2.2\% of an update can improve total runtime by at most
${\sim}1.02\times$, whereas a 10--15\% share permits approximately
1.11--1.18$\times$. The microbenchmark results in
Sections~\ref{sec:results-headline}--\ref{sec:results-scaling} should
therefore be interpreted as isolated-kernel speedups rather than equivalent
gains in training throughput.

For feed-forward policies, the addressable share can increase with longer
rollouts because policy computation batches timesteps independently whereas
credit assignment retains a temporal dependency; repeated advantage
computation increases it further. Credit assignment is also learner-side work
in high-throughput actor--learner methods such as IMPALA
\citep{espeholt2018impala} and APPO \citep{petrenko2020samplefactory}, while
PufferLib \citep{suarez2025pufferlib} similarly targets a small-policy,
high-throughput regime with a custom CUDA advantage kernel
\citep{pufferlib2025blog}. At (128, 128), CUDA graph replay improves the
optimizer step by only ${\sim}1.02$--$1.08\times$, indicating that launch
overhead is not the dominant cost.

\subsection{Comparison Against PufferLib}
\label{sec:results-pufferlib}

We compare \rltrp{} against PufferLib's hand-written CUDA advantage kernel
(\texttt{puff\_advantage\_row\_cuda}), which assigns one thread per
environment row and performs a sequential $O(T)$ scan. Measurements were
collected on an H100 after verifying both kernels against an independent
reference on all jointly supported outputs.\footnote{The benchmark uses a 
pinned, sha256-verified copy of PufferLib
3.0.0's CUDA extension source, JIT-compiled
(\texttt{benchmarks/pufferlib\_ext/}). Timing uses CUDA events with warmup,
explicit synchronization, and the minimum of 11 per-trial medians
(100 iterations/trial) per configuration; both kernels are checked against
an independent reference before any timing is trusted. Full methodology,
the equivalence proof, and the corresponding V-Trace comparison are
provided with the \rltrp{} benchmark suite.}

We evaluate two workload regimes. The \emph{classic on-policy} regime
($\texttt{seq\_len}\in\{128,\ldots,4096\}$,
$\texttt{num\_envs}\in\{128,\ldots,8192\}$, Figure~\ref{fig:pufferlib-classic}) represents moderate-to-long
rollouts with moderate parallelism, as commonly used in PPO-style training.
The \emph{massively parallel simulation} regime
($\texttt{seq\_len}\in\{8,\ldots,128\}$,
$\texttt{num\_envs}\in\{4096,\ldots,32768\}$, Figure~\ref{fig:pufferlib-mps}) covers short rollouts with
high environment parallelism. This regime appears in systems such as
PufferLib \citep{suarez2025pufferlib}, Gigaflow
\citep{cusumanotowner2025gigaflow}, GPUDrive
\citep{kazemkhani2024gpudrive}, Nocturne \citep{vinitsky2022nocturne},
Isaac Gym/Isaac Lab \citep{makoviychuk2021isaacgym,isaaclab2025}, and
robot-locomotion PPO workloads surveyed by UniLab \citep{jia2026unilab}.

\begin{figure}[t]
\centering
\includegraphics[width=\textwidth]{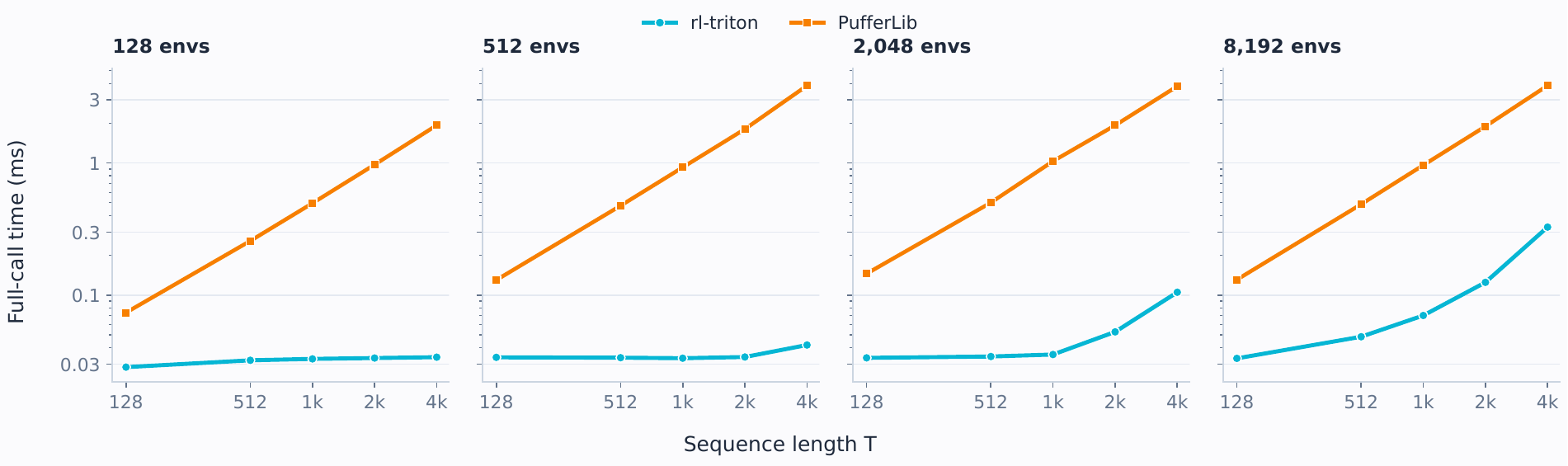}
\caption{Full-call GAE performance of \rltrp{} and PufferLib in the classic 
  on-policy regime on an H100. \rltrp{} has lower runtime across all 20 
  evaluated shapes.
}
\label{fig:pufferlib-classic}
\end{figure}

\begin{figure}[t]
\centering
\includegraphics[width=\textwidth]{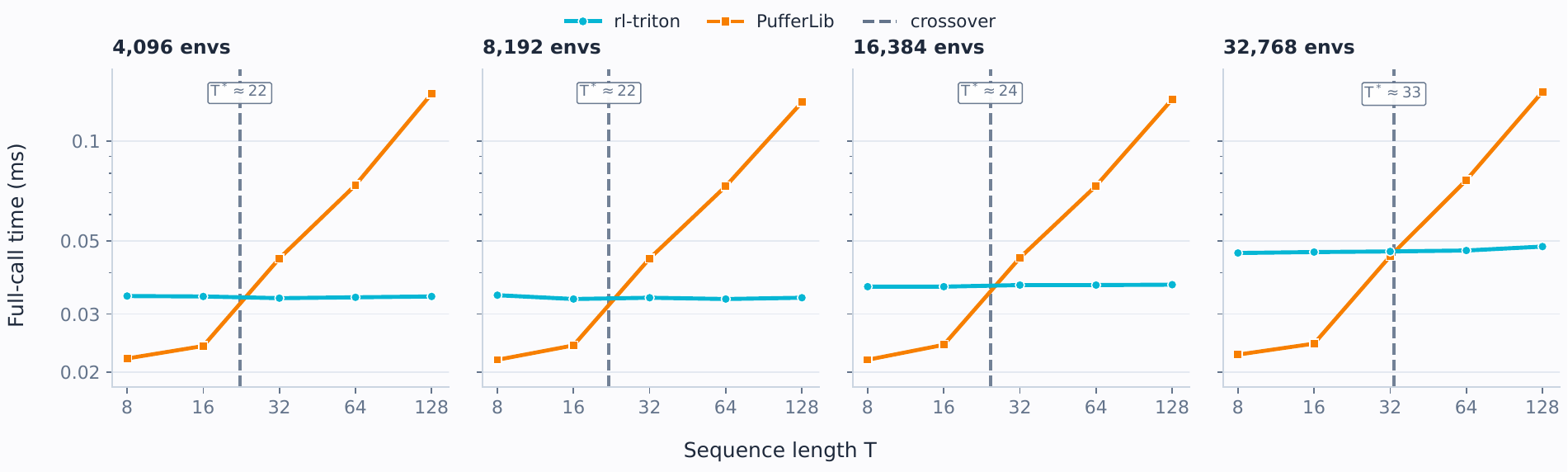}
\caption{Full-call GAE performance of \rltrp{} and PufferLib in the massively parallel
simulation regime on an H100. \rltrp{} has lower runtime at 11 of 20 shapes,
including every configuration with $\texttt{seq\_len}\ge 32$ except
$(\texttt{num\_envs},\texttt{seq\_len})=(32768,32)$; PufferLib is faster at
all configurations with $\texttt{seq\_len}\le 16$. The crossover shifts to
longer sequences as $\texttt{num\_envs}$ increases, reflecting PufferLib's
low overhead for very short per-thread sequential scans.}
\label{fig:pufferlib-mps}
\end{figure}

Across both regimes, \rltrp{} has lower full-call time at 31 of 40 evaluated
shapes. Sequence length largely determines the crossover: in the massively
parallel simulation regime, changing $\texttt{num\_envs}$ does not change the
winner at any tested $\texttt{seq\_len}$ except
$(\texttt{num\_envs},\texttt{seq\_len})=(32768,32)$
(Figure~\ref{fig:pufferlib-mps}). PufferLib exposes a single \texttt{dones}
flag per step and provides neither termination/truncation semantics nor a
bootstrap-value mechanism, so these cases cannot be compared directly.

\section{Related Work}
\label{sec:relatedwork}

\paragraph{RL frameworks.}
CleanRL \citep{huang2022cleanrl}, RLlib \citep{liang2018rllib}, and Sample
Factory \citep{petrenko2020samplefactory} implement GAE and return computation
in their training stacks. Representative PyTorch implementations evaluate the
temporal recurrence with sequential backward loops rather than an associative
scan. \rltrp{} provides GPU implementations with distinct handling of
termination, truncation, and rollout-window bootstrapping
(Section~\ref{sec:boundaries}).

\paragraph{Parallel scans in ML.}
Associative scans are widely used to parallelize linear recurrences in sequence
models. State-space and linear recurrent architectures such as S4
\citep{gu2022s4}, S5 \citep{smith2023s5}, LRU \citep{orvieto2023lru}, and Mamba
\citep{gu2023mamba} exploit related parallel recurrence formulations.
LRUs have also been used as recurrent backbones for partially observable RL.
\rltrp{} instead applies associative scans to post-rollout scalar credit
assignment rather than hidden-state sequence modeling.

\paragraph{IO-aware GPU kernels.}
FlashAttention \citep{dao2022flashattention,dao2023flashattention2}
demonstrates how tiling and fusion around the GPU memory hierarchy can reduce
HBM traffic and accelerate memory-intensive ML operators. \rltrp{} applies the
same IO-aware principle to RL credit assignment, using Triton to keep the
associative scan on-chip.

\paragraph{Off-policy correction.}
V-Trace \citep{espeholt2018impala} and Retrace($\lambda$)
\citep{munos2016retrace} are the off-policy correction algorithms implemented
in \rltrp{}. ACER \citep{wang2017acer} uses related truncated
importance-sampling corrections with trust-region stabilization but is not
currently implemented. These methods contain recursive importance-weighted
return computations to which the affine-scan formulation applies.

\section{Discussion and Future Work}

\subsection{Applicability Beyond Standard RL}

The recurrence implemented by \rltrp{} also appears outside conventional
on-policy RL, although direct applications beyond RL are limited.

\begin{itemize}

\item \textbf{Long-context LLM post-training.}
PPO-style LLM training applies credit assignment over generated tokens, so
long-chain-of-thought responses can turn advantage estimation into a
thousands-step recurrence. T-PPO computes GAE-style advantages over
8K-token truncated windows within responses of up to 24K tokens
\citep{fan2025truncated}, while recent long-context PPO experiments evaluate
token-level GAE with up to 8192-token responses and 4096 rollouts per update
\citep{gong2026segmental}. The same pattern appears in current LLM-RL implementations: SkyRL
\citep{cao2025skyrl} and \texttt{verl} \citep{sheng2024hybridflow} implement
token-level GAE with reverse sequential loops, while their REINFORCE++
estimators likewise compute discounted returns by scanning backward over
response tokens. These implementations expose the same GAE and discounted-return 
recurrences targeted by the fused scans here, with framework-specific masking semantics.

\item \textbf{VLA reinforcement learning.}
PPO-based post-training of vision-language-action policies requires advantage
estimation over environment interaction trajectories. RLinf-VLA
\citep{zang2025rlinf}, for example, supports PPO training of VLA policies
across robotic simulators, and its current implementation computes GAE over
$[T,B]$ rollout tensors with a reverse Python loop over $T$. This is exactly 
the GAE recurrence implemented by the fused scan here,
making VLA training with many parallel environments and longer interaction
horizons another natural application.

\item \textbf{World-model reinforcement learning.}
DreamerV4 \citep{hafner2025dreamer4} trains its value function on imagined
trajectories using bootstrapped $\lambda$-returns and derives its policy
advantages from the same targets. These returns obey the backward
affine recurrence implemented here, making imagination-based world-model
training another direct application of the fused $\lambda$-return scan.

\item \textbf{Spiking-neural-network BPTT.}
A direct non-RL instance occurs in backpropagation through hard-reset spiking
neurons when reset differentiation is detached. The forward pass generates a
runtime spike sequence $S_t$ from the neuron state; spike times, and hence the
resulting segment lengths, are not known in advance. The backward temporal
gradient then has the affine form
\[
    g_t = \alpha_t + \beta_t g_{t+1},
    \qquad
    \beta_t = \kappa_t(1-S_t),
\]
so each spike zeros the temporal carry. SNN frameworks such as SpikingJelly
provide dedicated GPU kernels for multi-step neuron forward and backward
passes \citep{fang2023spikingjelly}, making this a GPU-relevant instance of
the same segmented affine recurrence.

\end{itemize}

Direct applicability requires a scalar affine recurrence and enough independent
sequences for GPU parallelism. RL credit assignment additionally attaches reset
and bootstrap semantics to per-step boundary signals. Hard-reset
spiking-neural-network BPTT with detached reset differentiation is a non-RL
example with comparable recurrence-level boundary semantics.

\subsection{Limitations}
\label{sec:results-limitations}

\paragraph{Retrace register pressure above
\texorpdfstring{$\texttt{seq\_len}{=}2048$}{seq\_len=2048}.}
Retrace($\lambda$) becomes slower than \texttt{torch.compile} above
$\texttt{seq\_len}{=}2048$ because of the register-pressure effects described
in Section~\ref{sec:results-retrace}.

\paragraph{GAE device-time regression at the shortest,
highest-parallelism shape.}
At $\texttt{num\_envs}{=}16384$ and $\texttt{seq\_len}{=}16$ on H100,
GAE's device-only ratio falls below $1\times$ relative to
\texttt{torch.compile}, while its full-call ratio remains above $1\times$
because launch and wrapper overhead dominate at this shape. GAE is the only
one of the five measured algorithms that exhibits this inversion.

\paragraph{Warp-count tuning gap above \texttt{BLOCK\_SIZE=16384}.}
Per-\texttt{BLOCK\_SIZE} warp counts are tuned only up to
$\texttt{BLOCK\_SIZE}{=}16384$; larger, reachable sizes fall back to an
untuned default of 16 warps.  Tuning at small \texttt{BLOCK\_SIZE} recovered
2.1--2.7$\times$ over an untuned default there, suggesting that larger block 
sizes may also benefit from tuning. We leave
this evaluation to future work.

\paragraph{The H100-vs-RTX margin direction is shape-dependent -- open question.}
At a small, launch-overhead-dominated shape (128 environments, 1024 steps)
vs.\ the massively-parallel-simulation shape of Table~\ref{tab:headline},
GAE, V-Trace, and $\lambda$-returns each reverse which card shows the
larger Triton-vs-\texttt{torch.compile} margin; Retrace alone favors H100
at both.  This reversal is a shape effect rather than evidence of a
general "consumer vs.\ datacenter" trend; its underlying mechanism remains
unidentified.

\paragraph{Baseline dependence.}
All reported speedups are relative to the baseline described in
Section~\ref{sec:baseline}. A faster numerically stable PyTorch
implementation would reduce these ratios.

\paragraph{Sequence length limit.}
Every flat fused kernel is limited to $\texttt{seq\_len} \le 131072$ by
Triton's internal block size constraints; the backward kernels fall back to
an unfused, two-pass chunked path beyond that, but the two forward scans
(eligibility traces, episodic prefix sum) have no chunked fallback at all.

\paragraph{Float32 only.}
All kernels require \texttt{float32} inputs (Section~\ref{sec:precision});
\texttt{bfloat16} support is deferred to a future release pending a
precision-tradeoff analysis across algorithms and sequence lengths.

\paragraph{Scalar $\beta$ only.}
$\beta_t$ must be a scalar per timestep, shared across whatever is scanned.
Per-dimension $\beta_t$, as true online TD($\lambda$)/per-parameter
eligibility traces require \citep{vanseijen2016trueonline, sutton2018rl},
is not supported: $\beta_t$ is baked into the kernel launch as a scalar
rather than read per-lane from memory, and supporting it would require a 
different kernel structure.

\subsection{Planned Extensions}

Planned work focuses on broader rollout support and lower-overhead execution.
For multi-turn and tool-use trajectories, we plan observation-skip masking
that carries the recurrence across masked tokens ($\beta_t=1$), together with
fused advantage normalization and value loss. Systems work includes
\texttt{bfloat16} I/O, chunked forward scans, CUDA graph support, and fixing
the Retrace register-pressure regression
(Section~\ref{sec:results-retrace}). We also plan differentiable PPO, GRPO,
and KL-loss kernels with hand-written backward passes.

\section{Conclusion}

The seven credit-assignment algorithms considered here -- GAE, V-Trace,
Retrace($\lambda$), TD($\lambda$) returns, discounted returns, eligibility
traces, and episodic prefix sums -- share a common first-order linear
recurrence structure that can be evaluated with the same associative scan
operator. Each algorithm is implemented by its own fused Triton kernel,
which constructs its recurrence coefficients on-chip. By keeping the
$O(\log T)$ scan on-chip, \rltrp{} avoids the sequential loop's per-timestep
HBM round-trips and the vectorized \texttt{torch.compile} baseline's
intermediate HBM round-trips between doubling stages. In the massively
parallel simulation regime, this yields 1.6--5.70$\times$ full-call speedups
over the vectorized baseline. For most algorithms, the gains increase at
longer sequence lengths; Retrace is the exception because of its
long-sequence register-pressure regression. The implementation preserves the
specified handling of terminated episodes, truncated episodes, and
rollout-window boundaries.


\section*{Acknowledgements}
The author thanks Artur Niederfahrenhorst (Anyscale) for valuable feedback
and discussions, and Mark Towers (Gymnasium maintainer) for comments on 
an early draft and support with the arXiv submission.

\bibliographystyle{plainnat}
\bibliography{paper}

\appendix
\section{Four-Thread Scan Trace}
\label{sec:appendix-scan-trace}

This appendix gives the complete thread-by-thread combine underlying the
sketch in Section~\ref{sec:algebraic-verification}.  The array is loaded in
reverse chronological order, so thread~$i$ ($i=1,\ldots,4$) initially holds
the tuple $(\alpha_{4-i}, \beta_{4-i})$.  We write $T_{a..b}$ for
the accumulated tuple covering reversed positions $a$ through $b$, and
$T_{k}$ as shorthand for the single-position tuple $T_{k..k}$ -- thread~$k$'s
value \emph{before any reduction pass runs} (position 1 = last timestep
$t{=}3$, position 4 = first timestep $t{=}0$):
\[
  T_{1} = (\alpha_3,\beta_3), \quad
  T_{2} = (\alpha_2,\beta_2), \quad
  T_{3} = (\alpha_1,\beta_1), \quad
  T_{4} = (\alpha_0,\beta_0).
\]

\paragraph{Reduction pass 1} (stride 1). Each thread reads from the thread
exactly one position to its left.  Thread~1 has no left neighbour and keeps
its initial tuple.  Threads 2, 3, and 4 each combine with their immediate
left neighbour's \emph{initial} value $T_k$ (all reads happen simultaneously
before any writes, so every combine below uses the single-position tuples
just defined, never an already-updated span):
\begin{align*}
  T_{1..1} &= T_1 = (\alpha_3,\;\beta_3), \\
  T_{1..2} &= T_2 \oplus T_1
             = (\alpha_2 + \beta_2\alpha_3,\;\beta_3\beta_2), \\
  T_{2..3} &= T_3 \oplus T_2
             = (\alpha_1 + \beta_1\alpha_2,\;\beta_2\beta_1), \\
  T_{3..4} &= T_4 \oplus T_3
             = (\alpha_0 + \beta_0\alpha_1,\;\beta_1\beta_0).
\end{align*}
After pass 1, threads 2, 3, and 4 each cover two adjacent positions.

\paragraph{Reduction pass 2} (stride 2). Each thread reads from the thread
two positions to its left.  Thread~4 combines its current tuple $T_{3..4}$
with thread~2's result $T_{1..2}$:
\[
  T_{1..4} = T_{3..4} \oplus T_{1..2}
  = \bigl(\alpha_0 + \beta_0\alpha_1 + \beta_0\beta_1\alpha_2
          + \beta_0\beta_1\beta_2\alpha_3,
    \;\beta_3\beta_2\beta_1\beta_0\bigr).
\]
Thread~4 now covers all four positions, and the loop terminates -- exactly
$\log_2(\texttt{BLOCK\_SIZE}) = \log_2 4 = 2$ fixed passes, no dynamic
stopping criterion.  Each thread now holds the correct tuple for its own
position: thread~3's, for instance, is
$T_{2..3} \oplus T_{1..1} = T_{1..3}$ by the same stride-2 combine.

\section{Full-Grid Scaling: Remaining Algorithms}
\label{sec:appendix-scaling}

Table~\ref{tab:scaling} (Section~\ref{sec:results-scaling}) reports the
twelve-shape sweep for GAE and Retrace($\lambda$), including Retrace's
long-sequence regression analyzed in Section~\ref{sec:results-retrace}.
Table~\ref{tab:scaling-appendix} reports the corresponding results for the
remaining five algorithms.

\begin{table}[t]
\centering
\caption{Full twelve-shape sweep for V-Trace, TD($\lambda$)/
  $\lambda$-returns, discounted returns, eligibility traces, and episodic
  prefix sum. Values are full-call speedups over the vectorized
  \texttt{torch.compile} baseline using the v0.1.3 release measurements.}
\label{tab:scaling-appendix}
\renewcommand{\arraystretch}{1.1}
\scriptsize
\begin{tabularx}{\textwidth}{@{}X X X X X X X X X X X X@{}}
\toprule
& & \multicolumn{5}{c}{\textbf{H100 80GB HBM3}} & \multicolumn{5}{c}{\textbf{RTX 2000 Ada}} \\
\textbf{num\_envs} & \textbf{seq\_len} & V-Trace & $\lambda$-ret. & Disc.\ ret. & Elig. & Prefix & V-Trace & $\lambda$-ret. & Disc.\ ret. & Elig. & Prefix \\
\midrule
64    & 512  & 3.0$\times$ & 3.0$\times$ & 3.0$\times$ & 2.8$\times$ & 2.8$\times$ & 2.9$\times$  & 2.9$\times$  & 2.9$\times$  & 2.6$\times$ & 2.5$\times$ \\
128   & 1024 & 3.0$\times$ & 3.3$\times$ & 3.3$\times$ & 2.7$\times$ & 2.7$\times$ & 2.9$\times$  & 3.1$\times$  & 3.1$\times$  & 2.6$\times$ & 2.5$\times$ \\
256   & 1024 & 3.0$\times$ & 3.2$\times$ & 3.2$\times$ & 2.8$\times$ & 2.8$\times$ & 2.9$\times$  & 3.6$\times$  & 3.3$\times$  & 2.6$\times$ & 2.6$\times$ \\
512   & 128  & 2.9$\times$ & 2.9$\times$ & 2.8$\times$ & 2.5$\times$ & 2.4$\times$ & 2.8$\times$  & 2.7$\times$  & 2.6$\times$  & 2.3$\times$ & 2.2$\times$ \\
512   & 512  & 3.1$\times$ & 3.1$\times$ & 3.1$\times$ & 2.8$\times$ & 2.7$\times$ & 3.0$\times$  & 3.3$\times$  & 3.3$\times$  & 2.6$\times$ & 2.5$\times$ \\
512   & 2048 & 3.0$\times$ & 3.2$\times$ & 3.3$\times$ & 3.1$\times$ & 3.2$\times$ & 4.2$\times$  & 16.8$\times$ & 17.2$\times$ & 4.9$\times$ & 4.6$\times$ \\
512   & 4096 & 2.9$\times$ & 5.9$\times$ & 5.8$\times$ & 3.2$\times$ & 2.9$\times$ & 6.2$\times$  & 15.0$\times$ & 24.5$\times$ & 15.6$\times$ & 15.1$\times$ \\
4096  & 128  & 2.8$\times$ & 2.9$\times$ & 2.8$\times$ & 2.5$\times$ & 2.5$\times$ & 3.5$\times$  & 5.5$\times$  & 5.7$\times$  & 2.3$\times$ & 2.3$\times$ \\
4096  & 512  & 3.1$\times$ & 4.9$\times$ & 5.3$\times$ & 2.7$\times$ & 2.7$\times$ & 5.7$\times$  & 11.8$\times$ & 27.9$\times$ & 10.4$\times$ & 8.8$\times$ \\
4096  & 2048 & 4.8$\times$ & 8.5$\times$ & 8.0$\times$ & 5.4$\times$ & 5.4$\times$ & 7.2$\times$  & 14.9$\times$ & 18.1$\times$ & 7.8$\times$ & 7.9$\times$ \\
16384 & 128  & 3.0$\times$ & 4.4$\times$ & 4.2$\times$ & 2.0$\times$ & 2.2$\times$ & 5.4$\times$  & 10.0$\times$ & 21.8$\times$ & 8.5$\times$ & 8.0$\times$ \\
16384 & 512  & 5.2$\times$ & 9.0$\times$ & 8.8$\times$ & 4.8$\times$ & 4.7$\times$ & 6.6$\times$  & 12.9$\times$ & 15.6$\times$ & 6.5$\times$ & 6.5$\times$ \\
\bottomrule
\end{tabularx}
\end{table}

Unlike Retrace, none of the remaining five algorithms falls below
$1\times$ at any measured shape. At several long-sequence configurations,
the RTX~2000~Ada ratios are substantially larger than on H100. This is
driven mainly by the \texttt{torch.compile} baseline: for discounted returns
at $(512,4096)$, the speedup is 5.8$\times$ on H100 and 24.5$\times$ on RTX,
while the baseline is 11$\times$ slower on RTX (2.62\,ms vs.\ 0.24\,ms) and
the Triton kernel itself only ${\sim}2.6\times$ slower.

\end{document}